\documentclass{article}

\usepackage{microtype}
\usepackage{graphicx}
\usepackage{subcaption}
\usepackage{booktabs} 
\usepackage[table]{xcolor}
\usepackage{hyperref}

\usepackage[accepted]{icml2026}

\usepackage{amsmath}
\usepackage{amssymb}
\usepackage{mathtools}
\usepackage{amsthm}
\usepackage{array}

\usepackage[capitalize,noabbrev]{cleveref}
\usepackage[most]{tcolorbox}
\usepackage[dvipsnames]{xcolor}

\theoremstyle{plain}
\newtheorem{theorem}{Theorem}[section]
\newtheorem{proposition}[theorem]{Proposition}

\theoremstyle{definition}
\newtheorem{definition}[theorem]{Definition}

\theoremstyle{remark}

\usepackage{epigraph}
\usepackage[textsize=tiny]{todonotes}

\newtcolorbox{claimbox}[1]{
    enhanced,
    colback=gray!5,           
    colframe=black!75,   
    leftrule=1.5mm,             
    rightrule=0mm,            
    toprule=0mm,              
    bottomrule=0mm,           
    arc=0mm,                  
    title={#1},               
    detach title,             
    before upper={\tcbtitle\par\smallskip}, 
    coltitle=black,   
    fonttitle=\fontfamily{ppl}\selectfont\bfseries,
    boxsep=1mm,
    left=2mm,
    right=2mm,
    top=1mm,
    bottom=1mm,
}

\icmltitlerunning{Solipsistic superintelligence is unlikely to be cooperative}

\begin{document}

\twocolumn[
  \icmltitle{Solipsistic Superintelligence is Unlikely to be Cooperative}



  \icmlsetsymbol{equal}{*}

  \begin{icmlauthorlist}
    \icmlauthor{Rakshit S Trivedi}{mit}
    \icmlauthor{Natasha Jaques}{uw,gdm}
    \icmlauthor{Logan Cross}{gdm}
    \icmlauthor{Alexander Sasha Vezhnevets }{gdm}
    \icmlauthor{Joel Z Leibo}{gdm}
  \end{icmlauthorlist}

  \icmlaffiliation{mit}{Independent Researcher, CA, USA}
  \icmlaffiliation{uw}{University of Washington, Seattle, WA, USA}
  \icmlaffiliation{gdm}{Google Deepmind, London, UK}

  \icmlcorrespondingauthor{Rakshit S. Trivedi}{triver@mit.edu}

  \vskip 0.3in
]



\printAffiliationsAndNotice{}  

\begin{abstract}

AI's central challenge is shifting from capability to coexistence. The dominant paradigm in AI research focuses on developing powerful agents that treat the world as an exogenous and stationary source of feedback. We contend that superintelligence, an extremely capable task solver, born out of such a \textit{solipsistic} approach to AI design, is unlikely to be cooperative. Deploying AI systems induces endogenous non-stationarity, resulting in a train–test–deploy gap where historical distributions diverge from the deployment context. We refer to this as the \textit{self-undermining property} of unilateral optimization. Closing this gap requires AI that participates in cooperation: the equilibrium-selection process through which multiple actors navigate their interdependence. We call for a non-solipsistic research paradigm that treats this interdependence as a core design principle rather than approaching cooperation as a task to solve. This entails building dynamic evaluation testbeds involving adaptive counterparties, treating institutions as design primitives, and preserving human agency as a structural feature of the systems we build.

\end{abstract}

\section{Introduction}
\label{sec:introduction}

On a Friday evening in 2027, three competing AI reservation systems in San Francisco calculate optimal release times and learn to make phantom bookings so as to maximize confirmed seats for their users. Restaurant AIs respond by overbooking as pricing algorithms adjust to the perceived demand. As the evening progresses, this results in empty tables in fully-booked restaurants, surge  prices for nonexistent availability and hundreds unable to dine. Each AI system executes flawlessly against its objective, but the eventual outcome is a system failure. This represents a collective-action problem: when many agents act in their own rational interest within an environment with shared resources, the cumulative effect can be the degradation of the very environment they depend on~\citep{hardin1968tragedy, ostrom1990}.

Next, consider a scenario in which AI diagnostic systems become standard in radiology. Junior radiologists that are now trained with AI annotations develop pattern recognition shaped by the AI system. At the same time, seniors start experiencing degradation of unassisted skills resulting from disuse and  catch fewer errors the AI also misses. The feedback loop closes with physicians confirming AI suggestions and the AI learning from these confirmations. This leads to gradual atrophy in the capacity for independent human judgment and results in the narrowing of diagnostic diversity. 
Humans without opportunity to practice eventually lose the skills needed to operate on their own~\citep{kulveit2025gradual}.

These examples illustrate a fundamental principle: intelligence deployed among other intelligent actors transforms the environment it was designed to navigate~\citep{schelling1960,axelrod1984}. For any AI operating in such environments, unilateral optimization is \textit{self-undermining}. The more aggressively it exploits historical regularities, the faster other actors adapt in ways that render those regularities obsolete. In the examples above, deployed AI systems did not fail at their tasks, while still ultimately producing collective failure. Such dynamics are widely understood to be important in economics and game theory~\citep{parkes2015, hammond2025multiagent}, however, the dominant AI research paradigm seems to proceed as if they were edge cases rather than central challenges.

\begin{figure*}[t]
\centering
\begin{minipage}[c]{0.62\textwidth}
    \includegraphics[width=\textwidth]{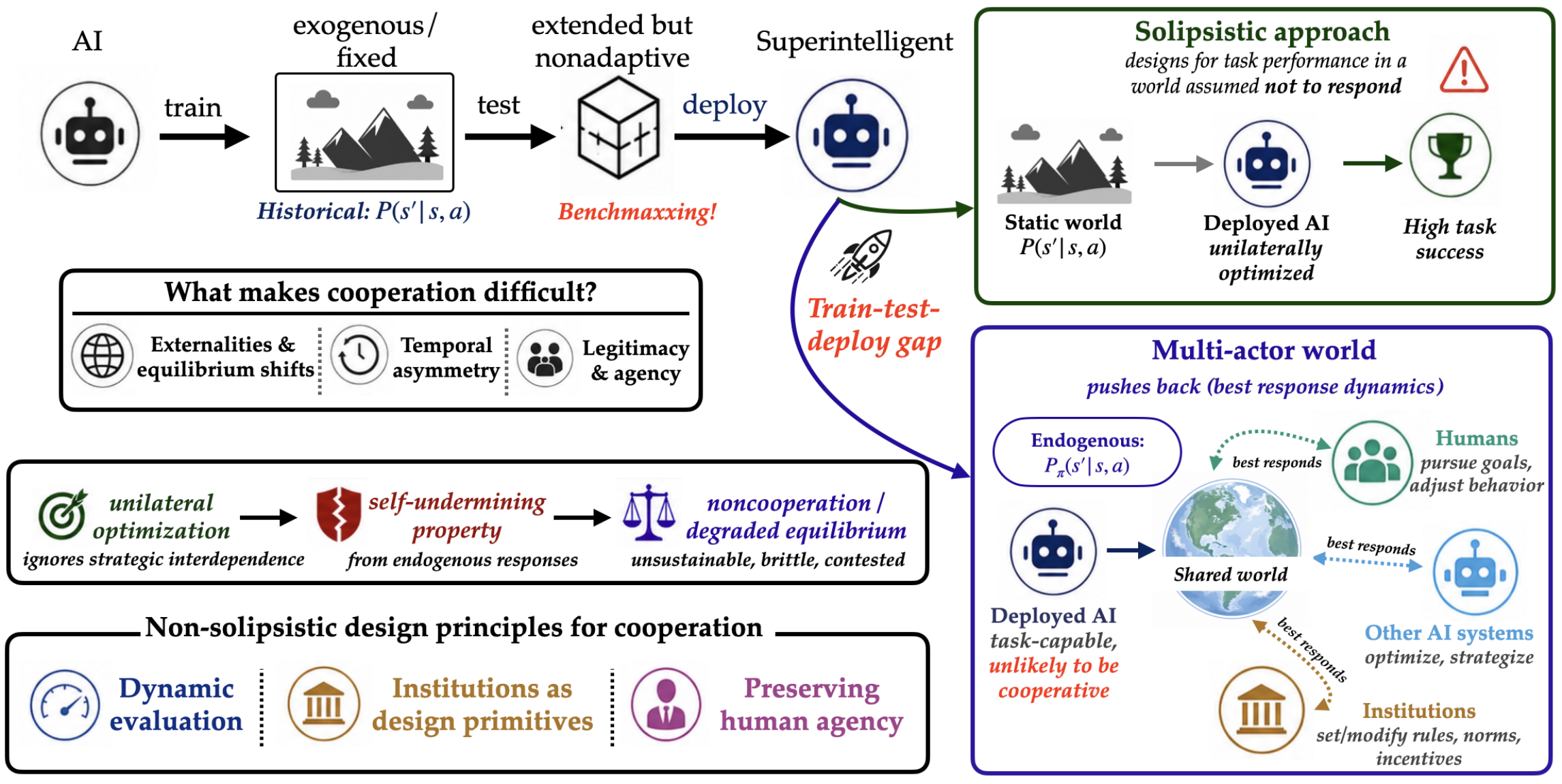}
\end{minipage}%
\hfill
\begin{minipage}[c]{0.36\textwidth}
\includegraphics[width=\textwidth]{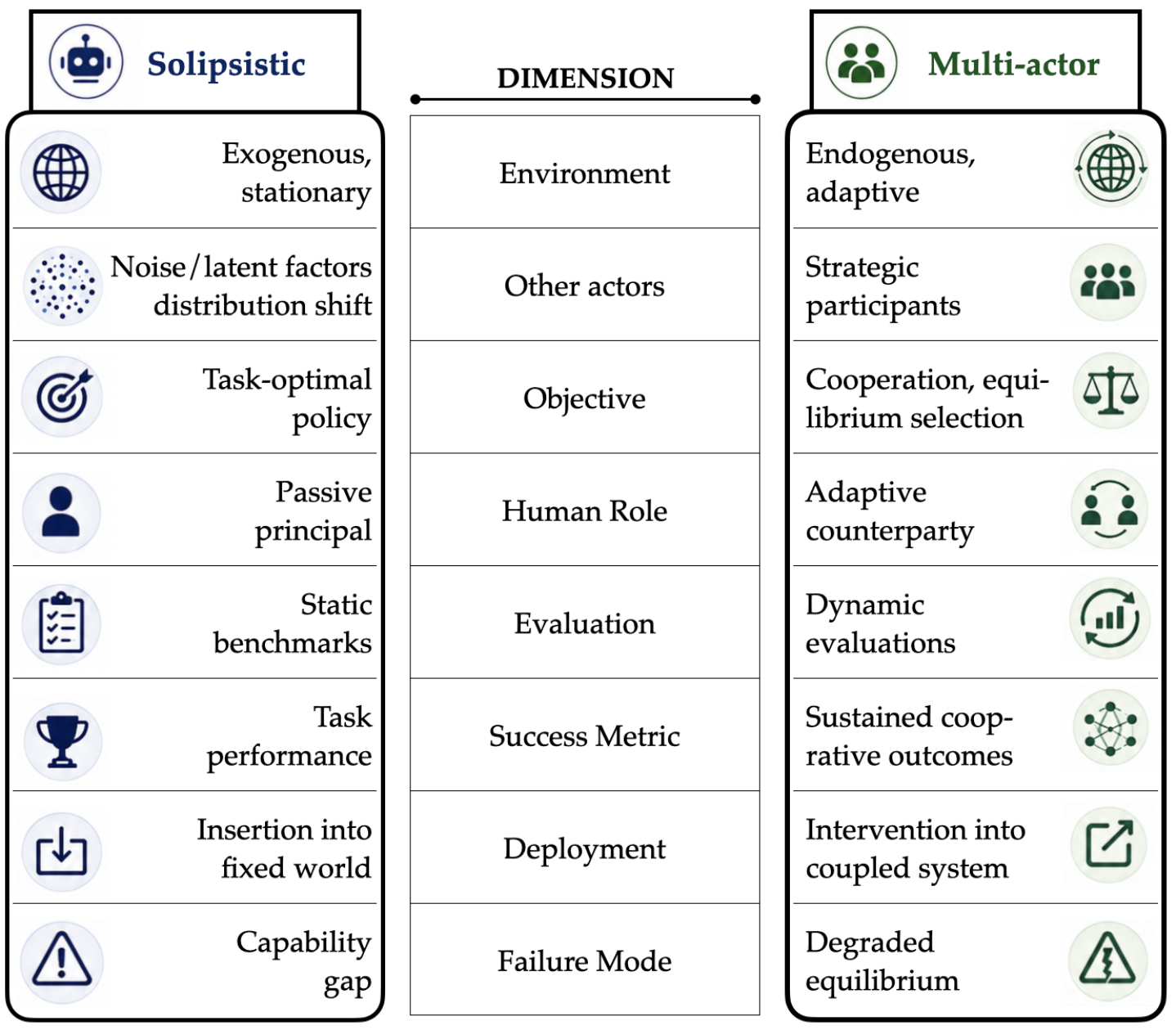}
\end{minipage}
\caption{
\textbf{Left:} Contrasts a solipsistic design approach with non-solipsistic design principles for cooperation. In the solipsistic approach, AI systems are trained and evaluated against a fixed, exogenous world, so deployment is treated as inserting a unilateral optimizer into a stationary environment. The train--test--deploy gap arises when this assumption meets a multi-actor world where entities best respond to AI's actions and induce endogenous non-stationarities. Cooperation is not a task to be solved in this setting but an equilibrium-selection process. Unilateral optimization  may remain task-successful while becoming self-undermining $\rightarrow$ unlikely to sustain cooperation. The non-solipsistic design principle aims to reduce this gap.
\textbf{Right:} Summarizes the corresponding shift across eight dimensions.}
\label{fig:paradigm-contrast}
\end{figure*}

AI's binding constraint is shifting from capability---solving problems (performing tasks)---to coexistence. The dominant research paradigm adopts what we term as \textit{solipsistic} approach to AI design, anchored in three implicit assumptions: the environment is exogenous to the agent's policy, the data distribution is stationary from training to deployment, and other agents are absorbed into the state space to be predicted rather than strategic actors whose responses reshape the game~\citep{legg2007universal,ouyang2022training}. This conception underlies much of contemporary AI development, from large language model pretraining to reinforcement learning. The core element of this paradigm is the development pipeline that includes pretraining on static corpora, post-training against frozen reward models, and hill-climbing (aka benchmaxxing) on fixed evaluation suites. Each stage treats the external world as a stationary distribution, and the measure of progress is performance on targets that do not respond. A benchmark (i.e.~a static reward model or a fixed held-out test set) is not an adaptive counterparty. It does not respond when the system improves or strategize against the system's behavior and preferences as a function of the system's behavior.
This methodological commitment, we argue, represents a category error. Rather, as capable systems deploy among adaptive agents the world pushes back: humans adapt their behavior~\citep{bowles1998endogenous}, institutions revise rules~\citep{ostrom1990}, and AI counterparties also adapt~\citep{perdomo2020}. The result is a divergence between historical and deployment performance: the \textit{train-test-deploy gap}. 

By \textbf{solipsistic superintelligence}, we refer to the product of this paradigm pushed to its limit. It represents an extremely capable AI (perhaps one that ``solves all stationary tasks'') built on assumptions that held historically up to the point of deployment but no longer hold afterwards.
A limiting case of a solipsistic superintelligence would be an AI so powerful that it can anticipate the dynamics of all sequences, except those that encode the response to its own deployment. 
When the outcomes of an AI system's actions at deployment depend on the joint behavior of multiple adaptive agents, good performance ceases to be an optimization output of any single policy in isolation and becomes an equilibrium property of the coupled system (See Figure~\ref{fig:paradigm-contrast} for contrast). 

Multiple different equilibria are generally possible, and they may differ sharply in welfare and distributional consequences. We use the term \textbf{cooperation} to refer to the negotiation process by which a society coordinates to select beneficial equilibria and avoid harmful ones. 
Note that cooperation (at the level of society) could include competition between individuals (e.g. if such competition enables selection of  good equilibria).  
Note also, it's not necessary for social dynamics to progress all the way to equilibrium, which is itself a moving target. 
What matters for cooperation by definition is the process by which equilibria are selected and re-selected, not convergence to any particular one. Cooperation in this sense is a structural feature of how multiple intelligences navigate their interdependence.

\begin{claimbox}{Central Thesis}
Cooperation is not an additional capability to be scaled or a task to solve, but an equilibrium property that emerges from multiple intelligences navigating their irreducible interdependence. The solipsistic paradigm fails to account for the structure that makes cooperation possible or fragile. \textbf{A solipsistic superintelligence, therefore, is unlikely to be cooperative}.
\end{claimbox}

AI subdisciplines concerned with cooperation have long considered the environment's capacity to ``push back'' in response to deployed technologies~\citep{dafoe2020,askell2019,conitzer2023,leibo2021,hammond2025multiagent}. However, these insights remain peripheral to the central scaling pathway of training solitary foundation models. We characterize the structural conditions that make cooperation the binding constraint, and argue that the dominant methodology is unlikely to satisfy them. The same optimization pressures that drive capabilities can destabilize existing equilibria, producing arms races, antisocial autocurricula, and brittle societies~\citep{leibo2019autocurricula,tomasev2025a}.

The classical AI safety literature has focused on the misaligned optimizer e.g. the paperclip maximizer that pursues its objective regardless of human values~\citep{bostrom2012,omohundro2008}. That concern has merit and has shaped where the field invests much of its effort~\citep{ji2025alignment, ngo2022}. But it overlooks a critical failure mode that is already widespread. A system can be perfectly aligned with its specification, values included, and still make things worse once it acts among other adaptive systems. Recommendation algorithms optimized for engagement produce polarization as a byproduct of success~\citep{germano2022ranking,milli2025engagement}, pricing algorithms interacting in markets learn supracompetitive prices without explicit communication~\citep{calvano2020algorithmic}, and automated order flow and liquidity provision interact to produce the kind of instability seen in the Flash Crash~\citep{kirilenko2017flash}. Each of these follows from treating a multi-actor game as though it were a unilateral optimization problem.

\textbf{Scope.} Superintelligence is a polysemous term with various definitions encompassing systems exceeding human cognition across domains, universal intelligence, and transformative economic capability. Our paper sets aside these definitions pertaining to capability thresholds and instead focuses on the the methodological assumptions of environmental exogeneity, objective stationarity, and singleton framing. Any system (foundation model, autonomous agent or AGI) will still inherit these assumptions if built under solipsistic commitments. While today's systems already exhibit the dynamics we describe, at the level of superintelligence, our position contests an implicit bet in the dominant methodology that scaling capability will eventually deliver cooperative outcomes the way it has delivered gains in reasoning or coding. While we expect solipsistic methods to remain effective in narrow domains, our paper targets sociotechnical settings where advanced AI deployment will be heavily exposed to response dynamics (``push back''). 
\section{From Capability to Coexistence}
\label{sec:cooperation}

This section articulates why cooperation is necessary for beneficial coexistence of AI among many adaptive actors.

\subsection{Why cooperation, not alignment?}

The alignment research program has produced valuable insights such as the recognition that capable systems may pursue objectives in unintended ways~\citep{ngo2022}, that reward signals can be gamed~\citep{kenton2021}, and that human preferences are difficult to specify and easy to satisfy superficially~\citep{kaufmann2023survey}. These contributions account for the world in which the hard problem is getting the objective right, and the expectation is that once the objective (or a rich combination of objectives as in \cite{sorensen2024roadmap}) is specified, optimization will deliver. We argue that this framing is unhelpful \citep{leibo2025societal}. Providing a clear specification of desired individual behavior is all well and good, but an individual may be perfectly aligned with such a specification and still participate in collective dynamics that produce harm, instability, or illegitimacy, even without overt misuse~\citep{edelman2025full}. Indeed, \citep{evans2026agentic} argues that future intelligence explosions will not arise from isolated, monolithic oracles, but from complex, multi-agent social systems and that consequently, the field must transition from dyadic, individual alignment to institutional alignment to effectively govern these interacting ecologies.

When the environment is constituted by other optimizers who respond to what a system does, the landscape itself shifts with every move, and “getting the objective right” stops being what separates success from failure.
Cooperation accounts for what happens when multiple capable systems interact in a shared environment
~\citep{schelling1960,ostrom1990}. Strategic interaction admits  equilibria, often multiple, with no guarantee that decentralized choices of individuals will add up to group-level wisdom~\citep{maskin2008,myerson2008}. The question, then, shifts from ``what do humans want?'' $\rightarrow$ ``what arrangements are sustainable when all parties (human and artificial) adapt in response to each other, and what processes select beneficial arrangements from the many possibilities?''. 

\begin{claimbox}{Key Claim 1}
In a world with many humans and many AIs, cooperation is neither optional nor an additional capability to be scaled but a necessary condition for sustained beneficial outcomes.
\end{claimbox}

\subsection{The stakes}

Three structural features of deployment drive what cooperation must contend with: externalities and equilibrium shifts, mismatched timescales across participants, and constraints on legitimacy and agency constraints.

\textbf{Systemic externalities and equilibrium shifts.} When optimizing entities or algorithms operate in a shared environment, their interactions produce effects that no single entity's objective accounts for. First, when models support decisions, the predictions themselves can influence the very outcomes they aim to predict, a phenomenon known as performative prediction~\citep{perdomo2020}. For example, predicting election results might trigger an underdog effect that alters voter turnout, thereby shifting the actual distribution. Second, when many agents act in their own interest, they can degrade shared environments, leading to a tragedy of the commons~\citep{hardin1968tragedy}. If competing AIs aggressively consume a shared resource---such as reservation algorithms making phantom bookings to secure tables---they can cause collective failure \citep{perolat2017multi, piatti2024cooperate}. The externalities arise because each system optimizes against an environment that other optimizers are simultaneously transforming. Social equilibria can shift rapidly once thresholds are crossed~\citep{granovetter1978threshold, marwell1993critical, centola2018experimental}, sometimes to a worse state than before. Intelligence does not dissolve this problem, in fact, higher capability may exacerbate it since more effective exploitation of opportunities can accelerate the dynamics destabilizing existing arrangements~\citep{duenez2023social}.

\textbf{Temporal asymmetry.} AI systems can adapt at different timescales than the entities among which they are deployed. Model updates and policy changes that might take an organization months can be implemented in days with AI. Humans and institutions move far more slowly, since new skills, revised routines, legislation, and shifting cultural norms unfold over weeks, months, or years~\citep{young2015evolution}.

\textbf{Legitimacy.} Coordination mechanisms often require legitimacy to function. Markets work when participants accept market allocations as broadly fair~\citep{SondakTyler2007}, and legal systems work when their procedures are seen as legitimate~\citep{hadfield2014microfoundations}. Cooperation requires legitimacy because an arrangement remains effective at coordination only while its actors keep accepting it. Once it is seen as illegitimate they resist, circumvent, or withdraw until the coordination unravels. Legitimacy erodes when the channels of participation and public deliberation on consequential decisions are bypassed or rendered ineffective~\citep{pasquale2015black,crawford2014big}. This leaves affected parties with no meaningful control over the choices that shape them~\citep{santoni2018} and undermines cooperation. 

\section{The Solipsistic Trap}
\label{sec:solipsistic}

Why can't the solipsistic approach satisfy the requirements outlined in Section~\ref{sec:cooperation}?

\subsection{Implicit assumptions of solipsistic AI}

The dominant methodology in machine learning rests on assumptions that are rarely stated. Three of them deserve attention: (i) \textit{Exogeneity} treats the data-generating process as independent of the learned policy. The  environment is modeled as a generator of observations indifferent to what the agent does. (ii) \textit{Stationarity} assumes the deployment distribution matches training and evaluation. Distribution shift, when acknowledged, is treated as a technical problem for robustness techniques to solve rather than a core feature of the deployment landscape arising from reactions of other intelligent entities. (iii) \textit{Singleton framing} conceives the system as a monolithic optimizer acting on the world. Other agents, when modeled at all, are absorbed into the environment as objects to predict and patterns to exploit rather than strategic actors that respond and reshape the landscape.

\subsection{Formalism: from MDPs to Markov games}

Sequential decision-making under uncertainty is commonly formalized as a Markov Decision Process (MDP) $(S, A, P, R, \gamma)$, with state space $S$, action space $A$, transition dynamics $P$, reward $R$, and discount factor $\gamma$. A policy $\pi$ specifies how the agent acts. Critically, $P$ and $R$ are fixed and do not depend on $\pi$.

Deployment breaks this assumption. Once a policy is deployed, other actors (humans, institutions, algorithms) observe its behavior and adapt. The transition dynamics become policy-dependent with $P_\pi(s' | s, a) \neq P(s' | s, a)$. While the physical laws $P$ may remain constant, the aggregate response of other agents shifts the state evolution observed by the deployed system.
The environment ceases to be exogenous and becomes a \textit{Markov game}~\citep{shapley1953stochastic,littman1994markov}, a multi-player game with strategic counterparties whose policies co-evolve with each other.

\vspace{0.2cm}
\begin{definition}
A learning problem exhibits \textbf{endogenous non-stationarity} when the deployment of policy $\pi$ induces changes in the transition dynamics $P$ or reward proxy $R$ through response adaptations of other agents. 
\end{definition}

\vspace{0.2cm}
\begin{definition}
The \textbf{train-test-deploy gap} is the systematic divergence between performance under exogenous historical data $J_{\text{train}}$ and performance under endogenous data produced by responses to the deployed policy $J_{\text{deploy}}$.
\end{definition}

A further property characterizes optimization in strategic environments. Let $\pi^*_{\text{exploit}}$ denote a policy that aggressively exploits regularities in historical data. Such exploitation creates incentives for other agents to adapt in ways that invalidate those regularities. We call this the \textbf{self-undermining} property: the more aggressively a unilateral optimizer exploits historical patterns, the faster it induces the adaptations that render those patterns obsolete. The effect varies with capability. For weakly capable systems, the adaptations induced by exploitation may be small, and the performance gap could stay within bounds. For more capable systems, the picture changes as deeper exploitation creates stronger incentives to adapt, and the adaptations themselves arrive as sharper regime shifts. We provide a formal discussion in Appendix~\ref{app:formalism}.

\subsection{Three channels of structured adaptation}

The train-test-deploy gap arises through three channels, each representing a distinct class of best-responding agents.

\textbf{Behavioral adaptation.} Humans alter their behavior in response to deployed systems~\cite{kulveit2025gradual}. Students may restructure learning strategies around AI tutors, shifting the learner distribution the system encounters. Pilots who rely heavily on autopilot can experience degradation of flying skills~\citep{parasuraman1997humans}, changing the capabilities of the human counterparty the automation must complement. 
In all such scenarios, the system confronts a distribution shaped by responses to its own presence.

\textbf{Institutional adaptation.} Organizations follow a similar pattern. When algorithmic screening enters hiring processes, the surrounding practices shift where candidates may adjust their resumes, recruiters may recalibrate their criteria and HR departments may revise workflows to accommodate or counteract the tool. Financial regulators have repeatedly modified rules in response to algorithmic trading. Such adaptations constitute a strategic response, reshaping the environment in which the  system operates \citep{guala2016understanding}.

\textbf{Algorithmic adaptation.} Other AI systems retrain, fine-tune, and co-evolve. Pricing algorithms learn against each other, producing emergent collusion that no single system was designed to pursue~\citep{calvano2020algorithmic}. Recommenders respond to other recommenders' behavior in the attention economy. Such algorithmic evolution produces autocurricula~\citep{leibo2019autocurricula}, the emergent training distributions generated by the interaction of learning systems that no single system's designers intended or anticipated.

These channels may be uncertain in detail but predictable in kind. We may not know precisely how each channel will evolve, but we can anticipate that they will adapt, their adaptations will be strategic, and those adaptations will reshape the distribution the deployed system faces (Appendix~\ref{app:empirical}).

\begin{claimbox}{Key Claim 2}
The train-test-deploy gap exposes a dataset shift characterized by structured non-stationarity arising from multi-player dynamics: humans, institutions, and algorithms respond to deployed systems, producing endogenous adaptations that can tip sociotechnical systems into degraded equilibria. The class of such adaptations is wide but structured in important ways that the solipsistic paradigm fails to recognize.
\end{claimbox}

\subsection{Equilibrium selection risk}

Endogenous non-stationarity does not merely add noise to performance estimates. Strategic adaptation can tip systems across equilibrium boundaries, with consequences that persist long after the initial perturbation.

Most coordination problems admit \textbf{multiple equilibria}: different self-reinforcing patterns of behavior that persist once they are established~\citep{luce1957games, sugden1986economics, young2015evolution}. A given set of agents and incentives are typically consistent with many possible stable arrangements, some better than others. The risk is that the system lands in a bad equilibrium rather than failing to find one. 

Introducing a powerful optimizer into a social system is an \textbf{intervention} that reshapes the payoff landscape. The optimizer's presence changes what strategies are available, what information is observable, and what adaptations are rewarded. Threshold and tipping point models suggest that small differences can determine which equilibrium basin the coupled system settles into~\citep{centola2018experimental,granovetter1978threshold}. Deployment details such as timing, scale, and interface design may shape this selection. Once a system tips into a degraded equilibrium, escaping may be costly or impossible~\citep{arthur1989competing}. Network effects, infrastructure dependencies and behavioral habits may create lock-in~\cite{qiu2025lockin}. 
The speed of AI deployment carries equilibrium selection risk that would be hard to undo by subsequent patches. 

Self undermining arises because methods designed under assumptions of exogeneity and stationarity (i.e.~solipsism) are being deployed into environments where those assumptions do not hold.

\section{Prediction Is Not Participation}
\label{sec:prediction}

A natural objection arises: If the problem is that other agents adapt, why not model their adaptations?  
Interaction dynamics between multiple players, on this view, would present simply a harder prediction problem and not a categorically different one. This section argues that the objection fails on two independent grounds either of which may suffice to block unilateral prediction as a solution path.

\subsection{Epistemic horizons}

Machine learning systems are fundamentally inductive: they identify patterns in historical data and generalize to new instances drawn from similar distributions. This inductive foundation encounters three limits when the task is predicting multi-actor dynamics under deployment.

\textit{(i) Novelty.} 
Large-scale deployment of capable AI systems produce strategic configurations that have never existed. The counterfactual, what happens when this system operates at scale among adaptive agents who know it exists, is unobservable by construction. To advance capabilities beyond what has existed is precisely to introduce conditions that historical data cannot model. This is the signature of open-ended systems, in which interacting components generate persistent novelty that cannot be anticipated from any prior snapshot of the system~\citep{hughes2024openendedness, stanley2019openendedness}, and where multi-actor interaction itself is a central generator of that novelty~\citep{leibo2019autocurricula}. Past observations can inform expectations about individual behaviors, but they cannot capture how the AI's own influence on the world gives rise to the self-undermining property at deployment.

\textit{(ii) Reflexivity.} Prediction in strategic environments is an active intervention. When agents anticipate that a system will predict their behavior, they may adapt to the prediction itself. The forecast becomes a variable in others' decision problems, inducing responses that the original model did not contemplate. The act of modeling is itself a move in the game~\citep{diaz2024milnor}, and sophisticated counterparties will treat it as such~\citep{perdomo2020}. Reflexivity can also be used strategically~\citep{soros2013fallibility}, as when ``meme stock'' companies took advantage of inflated expectations of their performance to issue new stock at inflated prices.

\textit{(iii) Combinatorial explosion.} Modeling the full set of participants rapidly becomes intractable, since humans hold heterogeneous beliefs and goals, institutions follow complex decision procedures, and algorithms pursue opaque objectives. The state space of joint behavior explodes and the relevant distributions resist tractable approximation~\citep{daskalakis2009complexity}. The standard way to restore tractability is to treat the other agents as a fixed part of the environment, which reinstates the exogeneity assumption~\citep{hernandezleal2019}. Once those agents adapt, the non-stationarity it was meant to avoid simply reappears.

\subsection{Legitimacy and participation}

Suppose the aforementioned epistemic limits could be overcome and some superintelligent AI could predict the full cascade of adaptive responses that its deployment would trigger. Unilateral optimization would still face another barrier: the legitimacy constraints that open societies impose on prediction, steering, and control~\citep{habermas1975legitimation,rawls1993political, pasquale2015black,crawford2014big, hadfield2014microfoundations}. Predicting behavior at scale also demands observation that crosses the contextual boundaries between distinct social spheres, eroding the contextual integrity on which trust depend~\citep{nissenbaum2004}. These constraints operate as feasibility bounds on admissible solutions, not as preferences to be weighed against efficiency. 

\textbf{Legal order.} Democratic governance and stable social coordination require that consequential decisions admit challenge through legitimate, recognized procedures \citep{hampshire1999justice}. A functioning legal order is not merely about achieving a given outcome, but relies on a system characterized by general rules and impersonal abstract reasoning implemented by open, public, and neutral procedures ~\citep{hadfield2012law}. These open processes are essential because they allow affected parties to introduce their private information, contest outcomes, and demand justifications. Unilateral optimization by an AI system bypasses these critical mechanisms by imposing outcomes based on predictive accuracy and opaque logic rather than a common one established through public reasoning. Even when an AI's predictions or decisions are technically correct, the absence of due process delegitimizes the result ~\citep{santoni2018}, undermining the coordination that legal order provides.

\textbf{Value pluralism.} Open societies are characterized by persistent, reasonable disagreement about values~\citep{berlin1969}.
This pluralism reflects the complexity of value, and a feature of a diverse, multicultural society~\citep{sorensen2024roadmap}. 
All objective functions privilege some values over others \citep{leibo2025societal}, imposing a resolution to disagreements that a democratic system deliberately leaves open \citep{mouffe1999deliberative}. Current alignment techniques (e.g. reinforcement learning from human feedback  \citep{kaufmann2023survey}), aggregate diverse preferences through an implicit voting rule, rather than preserving the underlying preference distribution \citep{siththaranjan2023distributional}.
A growing body of evidence shows that the resulting models produce homogeneous outputs~\citep{jiang2025artificial} and measurably influence human language toward their own patterns~\citep{yakura2024empirical, abdulhai2026llms}. 
Unilateral optimization thus may suppress the problem of coordination among agents with different values by forcing (or nudging toward) conformity, with predictable negative consequences.

\textbf{Preference endogeneity.} What people want is shaped through interaction rather than fixed beforehand. Optimizing for engagement modifies beliefs and tastes and optimizing for efficiency may restructure routines~\citep{bowles1998endogenous,bernheim2021theory,leibo2024theory}. For instance, on YouTube users consistently migrate from milder to progressively more extreme content, and recommender pathways can make such content reachable  \citep{ribeiro2020auditing}. If preferences are endogenous to the system's operation, then satisfying revealed preferences shapes people rather than serving them. The ability of modern agentic AIs to similarly transform and reshape human preferences is as yet only poorly understood.
An extreme version of this appears in recent clinical reports describing AI-associated delusions, where extended dialogue with large language models acts as a mechanism that shifts conviction and modulates human belief~\citep{hudon2025aipsychosis, morrin2026playing}.

\textbf{Goodhart dynamics.} Outcome-based metrics often prove brittle in strategic environments. When systems optimize for measurable proxies, those proxies decouple from the underlying goals they were meant to capture~\citep{john2024dead}. The most consequential versions of this decoupling involve multiple actors. When a system optimizes against a metric, the agents whose behavior the metric was meant to summarize respond to the optimization and thereby cause the statistical regularity that made the metric useful in the first place to disappear (since the metric captures a consequence of the behavior, not a cause of it). 
Alignment faking exemplifies the same dynamic inside the training pipeline. Here the evaluation is the metric, and a capable model that treats training as a game responds to it by presenting as cooperative under evaluation while pursuing other objectives in deployment~\citep{greenblatt2024, sheshadri2025some}. In a conventional sense, this looks like a single deceptive agent, but the decoupling happens only because the agent under evaluation is itself strategic and games the process meant to assess it. Multi-actor dynamics thus explain \emph{why} the most important Goodhart effects arise, including those that appear only to involve a single agent.

\begin{claimbox}{Key Claim 3}
Unilateral optimization cannot substitute for participation. Epistemic limits foreclose anticipation of novel equilibria, while legitimacy constraints render unilateral solutions inadmissible even where prediction might succeed. Either alone blocks the solipsistic path but together they establish 
that the viable way forward is to design AI capable of participation in the equilibrium-selection process cooperation requires. 
\end{claimbox}
\section{Toward Non-Solipsistic Research }
\label{sec:action}

Each new technology deployment is an intervention into a coupled system, creating winners, losers, and second-order instabilities the tech was not designed to handle. The examples developed in Section~\ref{sec:introduction} document this at length across markets, recommendation and language-model deployment. This also extends to other domains such as geopolitics and cybersecurity, where equilibria rest on rough offense and defense symmetries~\citep{brundage2018malicious}. Our paper makes the case for treating this coupling as foundational to how AI systems are evaluated, deployed and governed. We call for a non-solipsistic research agenda organized around three directions in which the multi-actor design principle takes concrete shape: dynamic evaluation, institutions as design primitives, and the preservation of human agency. 

\subsection{Dynamic Evaluation}
\label{sec:dynamic-eval}

We formalize an evaluation procedure as a tuple $(\mathcal{D}, \mu)$, where $\mathcal{D}$ is a test distribution over interaction trajectories and $\mu$ is a scoring functional mapping the AI's behavior under $\mathcal{D}$ to a real-valued score. If $\mathcal{D}$ is fixed independently of the policy $\pi$ being evaluated, the resulting procedure can be considered  static. This encompasses broadening the distribution $\mathcal{D}$ with techniques such as scaling task diversity and layering capability evaluation with human-interaction and systemic-impact assessment. We argue that as long as the broadening does not account for the effect of $\pi$, the train-test-deploy gap will persist. A \textit{dynamic evaluation} is the one in which $\mathcal{D}_\pi$ depends on $\pi$ through the responses of adaptive counterparties whose policies update as a function of $\pi$'s behavior. The score $\mu(\pi; \mathcal{D}_\pi)$, then, reflects the coupled system rather than $\pi$ alone. Under Definition~\ref{def:endogenous-ns}, the deployment distribution is itself such a $\mathcal{D}_\pi$.

We identify the key ingredients to design a valid dynamic evaluation procedure. Counterparties must adapt strategically rather than randomly, in a pattern that renders the score interpretable. The choice of policy class, update rule, and calibration against deployment play a vital role in this design. Counterparties model the system that models them, producing a regress that any tractable evaluation must truncate, making the depth of recursive modeling an important choice. The equilibrium concept the evaluation targets must be specified, since a score under $\mathcal{D}_\pi$ is a measurement of the equilibrium the joint system is approaching, and different concepts correspond to different notions of  performance. The evaluation protocols must allow scores to be compared across runs, systems, and counterparty populations, since a single score against a single $\mathcal{D}_\pi$ realization would be an isolated demonstration rather than a measurement instrument.

Recently evaluation research has engaged with dynamic evaluation but no approach yet meets all the requirements we articulate in combination. Holistic and sociotechnical evaluation frameworks~\citep{liang2023holistic, srivastava2022beyond, weidinger2023sociotechnical} broadened the metrics and scenarios against which systems are scored. Dangerous-capability and autonomous-task evaluations~\citep{kinniment2023evaluating, shevlane2023model, phuong2024evaluating} introduce limited adaptation via counterparties, but these are typically scripted, so the reported behavior does not select the same equilibria likely to be selected in deployment. Multi-agent testbeds~\citep{leibo2021, vezhnevets2023concordia}, agentic economies~\citep{johanson2022emergent, tomasev2025b, hadfield2025economy}, and open-ended environments documenting agent-interaction failures~\citep{shapira2026agentschaos} take a step forward by letting the system interact with a population of agents. Appendix~\ref{sec:dynamic-eval-methods} outlines a set of dynamic evaluation methods and relevant existing works across them.

\subsection{Institutions}
\label{sec:inst}

The institutional manifestation of the self-undermining property is not a new problem. Incentive structures erode as participants adapt, and institutions either update or cease to function \citep{tainter1988collapse}. Cooperation at scale is sustained when surrounding institutions restructure incentives to make cooperative behavior individually rational~\citep{ostrom1990}. Mechanism design formalizes this by characterizing how rules produce collective outcomes given agents with various properties~\citep{hurwicz1973, maskin2008, myerson2008}.

Once we treat institutions as design objects, several instantiations open up within current AI pipelines. ~\citep{shao2025drtulu} modifies standard RLHF by replacing fixed rewards with rubrics that co-evolve with the agent. Training-time incentives are then restructured dynamically rather than against a static target. In agentic marketplaces, agents interact through protocols for bidding, reputation, and communication that function as institutional constraints. These protocols can themselves be co-learned via mechanism design objectives~\citep{tomasev2025b, shahidi2025, yang2021adaptive}. Forum-style environments such as Moltbook offer testbeds for investigating how institutional structure can shape the emergence, stabilization, and decay of norms among interacting agents~\citep{manik2026openclaw}. The performative prediction framework~\citep{perdomo2020} formalizes how a deployed predictor distorts the distribution it predicts. Prediction markets address this by anchoring the training signal to realized events rather than to a metric the system can game (preserving reflexivity at the level of outcomes rather than measurements). Digital institutions~\citep{knight2025democracy, leibo2025pragmatic} produce shared classifications of agent behavior at the speed and scale of AI deployment. Their function is to resolve ambiguity, adapt with circumstances, and serve as a reference that coordinates normative judgment across agent populations.

\subsection{Preserving Human Agency}

Section~\ref{sec:prediction} implies the following commitment: the human response to new technology is a fundamental design constraint, rather than a nuisance parameter to be controlled for. Section~\ref{sec:inst} addressed the channel where humans and organizations adjust to deployed systems on timescales of days to years through behavior change and institutional revision. A second channel is less visible, where humans form themselves inside a world that contains the AI, rather than just respond to it. The self-undermining property takes its most consequential form on this channel. What humans learn, practice, and come to rely on is shaped by the system's presence, so the human capabilities the AI will encounter on its next deployment are partly co-produced by its previous deployments~\citep{kulveit2025gradual}.

The solipsistic paradigm treats human cognition and skills as a fixed distribution. In practice, that distribution is being continuously reshaped by the systems humans interact with. Students develop cognitive habits with AI tutors at their elbow. Similarly, researchers build careers in fields whose tools and norms are being reshaped faster than the training of the people entering them. Joint-system metrics that measure human plus AI output~\citep{narayanan2025ai} cannot see the slower channel along which human learning is being shaped, which is precisely where the long-term shifts compound. From the non-solipsistic view, this channel is a research direction of its own, concerning what happens to the human learning as the joint system evolves.

The design target on this channel is the distinction between tools that expand the human option space, augmenting agency, and tools that replace human decision-making, compressing it. Systems that present options and defer to human judgment preserve the deliberative role that legitimacy requires. Systems that optimize end-to-end risk making human participation nominal~\citep{santoni2018}. Keeping humans in the loop requires that they retain the skills, information, and cognitive engagement needed to exercise meaningful authority, rather than only the formal authority to intervene~\citep{parasuraman1997humans}. Designing agents to effectively coordinate with humans calls for architectures that capture behavioral diversity and allow steering at deployment~\citep{trivedi2025inner, jha2025crossenvironment}. Finally, it is imperative to include the impact assessment of AI deployment on human skills, autonomy, and meaningful choice as a core part of evaluation pipelines, rather than as a separate ethical concern~\citep{wang2025psychometrics, haupt2025centaur, kulveit2025withhumans}.  
\section{Alternative Views} 
\label{sec:views}

Here we discuss the central alternative views our exposition invites. Appendix~\ref{app:views} additionally summarizes rebuttals to an extended set of objections.

\textbf{Argument 1. Multi-actor designs have worse failure modes.} Economies of interacting actors introduce coordination failures that single aligned optimizers avoid. Multiple actors can race to the bottom, collude, or deadlock. Decentralizing capability across many agents does not eliminate the coordination problem; it multiplies the points of failure and makes oversight harder. A well-aligned monolithic system offers more tractable safety guarantees than a poorly understood ecosystem of interacting ones \cite{bostrom2014superintelligence}.

\textbf{Rebuttal.} Multi-actor systems do exhibit coordination failures, and decentralization alone guarantees nothing \citep{ostrom1990, hardin1968tragedy}. The argument's comparison between a well-aligned monolithic system and a poorly understood ecosystem is, however, not the relevant one. A monolithic optimizer deployed among humans, institutions, and other algorithms encounters interaction dynamics at deployment, while having been designed as if they did not exist. The actual choice is between systems that acknowledge strategic interdependence in their design and systems that defer this reckoning until deployment, when the dynamics are least tractable. The train-test-deploy gap (Section~\ref{sec:solipsistic}) is precisely what results from this deferral.

The argument's appeal to oversight and regulation is telling. Regulation exists because markets, left to their own dynamics, produce externalities, collusion, and instability. Regulation is itself an institutional technology for managing multi-actor coordination \citep{hurwicz1973}. It thus demonstrates that humans have developed governing tools for multi-actor systems, tools the solipsistic paradigm ignores. Computational mechanism design \citep{parkes2015}, reputation systems, and coordination protocols are the AI analogues of these institutional technologies.

Tractability in design does not imply robustness at deployment. A single aligned optimizer also presents a single point of failure because if its alignment is subtly wrong, or if conditions shift beyond its training distribution, there is no redundancy, no competitive pressure, and no distributed check. The resilience of distributed systems is well-documented in domains from internet architecture to immune systems to ecological networks \citep{page2010diversity}.

\textbf{Argument 2. Competitive pressure produces cooperation naturally.} Markets and evolution produce cooperation through competition. Non-cooperative strategies get outcompeted or regulated. AI development follows similar dynamics: systems that fail to cooperate will be abandoned by users, rejected by regulators, or outcompeted by more cooperative alternatives. No explicit design for cooperation is needed as selection pressure will do the work.

\textbf{Rebuttal.} Selection does produce cooperation, under specific conditions that AI deployment systematically violates. Evolutionary cooperation requires repeated interaction with identifiable partners, mechanisms for reputation and punishment, and timescales that allow selection to operate before damage accumulates \citep{axelrod1984, nowak2006}. Market cooperation similarly requires low transaction costs, well-defined rights, and manageable externalities \citep{coase1960}. When these conditions hold, competitive pressure can favor cooperative strategies. When they fail, selection produces arms races, exploitation, monopoly, and collapse.

AI deployment fails these conditions on multiple dimensions. Interactions are often anonymous or intermediated, and systems can be retrained or deployed through shells that obscure accountability. Externalities are pervasive (e.g.~harms from engagement-maximizing recommenders). LLM-based pricing agents autonomously converge on supracompetitive prices in oligopoly settings \citep{fish2024algorithmic}, divide markets in multi-commodity Cournot competition \citep{lin2024strategic}, and self-play Q-learners provably learn collusive policies in iterated social dilemmas \citep{bertrand2025selfplay}. These are degraded equilibria in the sense of Section~\ref{sec:solipsistic}. Recent work does report cooperation emerging from intergroup competition in language model agents \citep{tonini2025superadditive}, but the underlying setting is iterated prisoner's dilemma with repeated interaction, identifiable partners, and stationary rules, conditions that satisfy the classical requirements for cooperation by construction \citep{axelrod1984, nowak2006}. The dependence of these outcomes on game structure, training procedure, and initial conditions is itself the point. Cooperation under deployment is context-determined, which is precisely what the multi-actor design principle treats as first-order.

\textbf{Argument 3. The empirical track record does not support alarm.} Current AI has not caused catastrophic coordination failures. The theoretical concerns are speculative and we should wait for evidence of actual systemic failures before overhauling the research paradigm.

\textbf{Rebuttal.} Recommenders have not collapsed society, but they have measurably increased polarization, degraded epistemic commons, and reshaped political discourse in ways that democracies are struggling to absorb \citep{germano2022ranking, milli2025engagement}. Similar patterns appear across the examples developed in Section~\ref{sec:introduction}, including algorithmic collusion in pricing \citep{calvano2020algorithmic}, the Flash Crash \citep{kirilenko2017flash}, and alignment faking in language models \citep{greenblatt2024, sheshadri2025some}. So the claim that the empirical track record does not support alarm is itself highly questionable. Moreover, the argument here would invert the appropriate burden of proof. Waiting for evidence of systemic failure before changing course is precisely the Collingridge dilemma. By the time consequences are undeniable, the technology is entrenched and correction is costly \citep{collingridge1980}. The absence of catastrophe thus far may reflect the limited capability and deployment scale of current systems or the short time period for human adaptation rather than the adequacy of the solipsistic approach.

\section{Conclusion}

We have argued that solipsistic superintelligence, however capable on stationary tasks, is unlikely to be cooperative. Unilateral optimization in environments populated by other adaptive agents is self-undermining since deployment induces the very non-stationarities that invalidate training assumptions. Epistemic limits foreclose prediction of novel equilibria, and legitimacy constraints rule out unilateral control even where prediction might succeed. These are structural features of optimization in strategic environments, unlikely to be resolved by scale. The shift we call for treats deployment as an intervention into a coupled system that pushes back, rather than as insertion into a fixed one. This entails building evaluation frameworks where test distributions are generated by adaptive counterparties, treating institutions as design primitives that restructure incentives at the pace of the systems they govern, and preserving human agency as a structural feature of the systems we build. Coexistence, rather than capability, is the binding constraint on beneficial AI, and our approach to AI must reflect this.


\bibliography{icml2026}
\bibliographystyle{icml2026}

\newpage
\appendix
\onecolumn
\section*{Appendix}

\section{Detailed Formalism}
\label{app:formalism}

This section provides formal foundations for the concepts introduced in Section~\ref{sec:solipsistic}. We begin with standard formulations, extend to multi-actor settings, and characterize the conditions under which endogenous non-stationarity and equilibrium selection risk arise.

\subsection{From MDPs to Markov Games}

A Markov Decision Process (MDP) is a tuple $(\mathcal{S}, \mathcal{A}, P, R, \gamma)$ where $\mathcal{S}$ is a state space, $\mathcal{A}$ is an action space, $P : \mathcal{S} \times \mathcal{A} \to \Delta(\mathcal{S})$ specifies transition dynamics, $R : \mathcal{S} \times \mathcal{A} \to \mathbb{R}$ is a reward function, and $\gamma \in [0, 1)$ is a discount factor. An actor selects a policy $\pi : \mathcal{S} \to \Delta(\mathcal{A})$ to maximize expected cumulative reward:
\begin{equation}
J(\pi) = \mathbb{E}_\pi \left[ \sum_{t=0}^\infty \gamma^t R(s_t, a_t) \right]
\end{equation}
The critical assumption is that $P$ and $R$ are exogenous, in the sense that they do not depend on the actor's policy $\pi$. This assumption underwrites the convergence guarantees of reinforcement learning algorithms and the validity of offline evaluation on historical data.

A Markov Game, also called a stochastic game, generalizes the MDP to a setting with $n$ actors \citep{shapley1953stochastic, littman1994markov}. Formally, a Markov game is a tuple $(\mathcal{N}, \mathcal{S}, \{\mathcal{A}_i\}_{i \in \mathcal{N}}, P, \{R_i\}_{i \in \mathcal{N}}, \gamma)$ where:
\begin{itemize}
    \item $\mathcal{N} = \{1, \ldots, n\}$ is the set of actors
    \item $\mathcal{S}$ is the state space
    \item $\mathcal{A}_i$ is the action space for actor $i$, with joint action space $\mathcal{A} = \prod_i \mathcal{A}_i$
    \item $P : \mathcal{S} \times \mathcal{A} \to \Delta(\mathcal{S})$ specifies transition dynamics
    \item $R_i : \mathcal{S} \times \mathcal{A} \to \mathbb{R}$ is the reward function for actor $i$
    \item $\gamma \in [0, 1)$ is a common discount factor
\end{itemize}
Each actor $i$ selects a policy $\pi_i : \mathcal{S} \to \Delta(\mathcal{A}_i)$. The joint policy is $\pi = (\pi_1, \ldots, \pi_n)$, and actor $i$'s expected return depends on all actors' policies:
\begin{equation}
J_i(\pi) = \mathbb{E}_\pi \left[ \sum_{t=0}^\infty \gamma^t R_i(s_t, a_t) \right]
\end{equation}

In an MDP, the optimal policy $\pi^*$ is well-defined as the policy that maximizes $J(\pi)$ against fixed dynamics. In a Markov game, no single optimal policy exists, since each actor's best response depends on others' policies. The result is interdependent optimization problems that admit equilibrium concepts rather than optima.

\subsection{Endogenous Non-Stationarity}

The solipsistic approach treats deployment as if the actor faces an MDP: dynamics $P(s'|s,a)$ are assumed fixed and exogenous. But when capable systems deploy among adaptive actors, this assumption fails. Other actors, including humans, institutions, and algorithms, observe the deployed policy and adapt their behavior accordingly. The dynamics become \textit{policy-dependent}.

\begin{definition}[Endogenous Non-Stationarity]
\label{def:endogenous-ns}
A learning problem exhibits \textbf{endogenous non-stationarity} if the deployment of policy $\pi$ induces a shift in the transition dynamics or reward function:
\begin{equation}
P_{\pi}(s'|s,a) \neq P(s'|s,a) \quad \text{or} \quad R_{\pi}(s,a) \neq R(s,a)
\end{equation}
where the subscript $\pi$ indicates dependence on the deployed policy, mediated through best-response adaptations by other actors.
\end{definition}

This is distinct from \textit{exogenous} non-stationarity such as seasonal variation or concept drift from external causes. Endogenous non-stationarity is \textit{caused by the policy itself}, through strategic responses it provokes. We note that $R$ here may be a proxy for the designer's underlying objective, in which case endogenous non-stationarity also captures proxy decoupling under deployment, the mechanism behind the most consequential Goodhart effects discussed in Section~\ref{sec:prediction}.

\begin{definition}[Train-Test-Deploy Gap]
\label{def:gap}
The \textbf{train-test-deploy gap} is the divergence between performance evaluated on historical (exogenous) data and performance under deployment (endogenous) conditions:
\begin{equation}
\text{Gap}(\pi) = J_{\text{train}}(\pi) - J_{\text{deploy}}(\pi)
\end{equation}
where $J_{\text{train}}(\pi) = \mathbb{E}_{P}[\sum_t \gamma^t R(s_t, a_t)]$ is evaluated under the historical distribution $P$, and $J_{\text{deploy}}(\pi) = \mathbb{E}_{P_\pi}[\sum_t \gamma^t R(s_t, a_t)]$ is evaluated under the policy-induced distribution $P_\pi$.
\end{definition}

Standard generalization bounds in supervised learning and regret bounds in online learning assume that training and deployment distributions are identical or that distribution shift is bounded. Endogenous non-stationarity violates these assumptions as deployment distribution is a function of the policy, and stronger policies may induce larger shifts.

\paragraph{Connection to Performative Prediction.} The performative prediction framework~\citep{perdomo2020} formalizes a related phenomenon: a predictive model $f$ deployed on a population induces a distribution shift $\mathcal{D}(f)$ that depends on the model itself. The performative risk is:
\begin{equation}
\text{PR}(f) = \mathbb{E}_{z \sim \mathcal{D}(f)}[\ell(f, z)]
\end{equation}
where $\ell$ is a loss function. Minimizing performative risk is harder than minimizing standard statistical risk because the distribution being predicted responds to the predictor. Our setting extends performative prediction in two ways. First, we consider sequential decision-making rather than one-shot prediction, introducing temporal dynamics and path-dependence. Second, we explicitly model multiple strategic actors rather than a single responsive distribution, allowing for game-theoretic equilibrium analysis.

\subsection{The Self-Undermining Property}

A counterintuitive feature of optimization in strategic environments is that more aggressive exploitation of historical regularities can accelerate their obsolescence.

\begin{definition}[Self-Undermining Property]
\label{def:self-undermining}
Let $\pi_\theta$ be a parameterized policy. The policy family exhibits the \textbf{self-undermining property} at $\theta$ if moving in the direction of steepest ascent in training performance decreases deployment performance:
\begin{equation}
\nabla_\theta J_{\text{train}}(\pi_\theta) \cdot \nabla_\theta J_{\text{deploy}}(\pi_\theta) < 0.
\end{equation}
\end{definition}

This occurs when the policy exploits patterns that depend on other actors' current strategies. As the policy extracts more value from these patterns, it strengthens incentives for other actors to adapt by changing strategies, seeking alternatives, or exiting the interaction entirely. The very success of exploitation hastens its own obsolescence.

\begin{proposition}[Sufficient Conditions for Self-Undermining]
\label{prop:self-undermining}
Let $\pi_\theta$ be a parameterized policy and suppose:
\begin{enumerate}
    \item Other actors best-respond to $\pi_\theta$ with policies $\pi_{-i}^{BR}(\theta)$.
    \item The mapping $\theta \mapsto \pi_{-i}^{BR}(\theta)$ is differentiable, as it is under standard smoothing assumptions on best responses such as logit-quantal response \citep{mckelvey1995}.
    \item Historical regularities exploited by $\pi_\theta$ depend on $\pi_{-i}$ remaining fixed, so that $\frac{\partial J_{\text{deploy}}}{\partial \theta}\big|_{\pi_{-i}} > 0$ when training performance improves.
    \item Adaptations by other actors harm the deploying actor's deployment performance: $\frac{\partial J_{\text{deploy}}}{\partial \pi_j} < 0$ for $j \neq i$ in the relevant region of parameter space.
\end{enumerate}
Then for sufficiently aggressive exploitation, captured by $\big\| \frac{d\pi_{-i}^{BR}}{d\theta} \big\|$ being sufficiently large, the policy family exhibits the self-undermining property at $\theta$.
\end{proposition}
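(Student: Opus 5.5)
The plan is a chain-rule decomposition of the deployment gradient, followed by a sign comparison with the training gradient. Write $J_{\text{deploy}}(\theta) = J_{\text{deploy}}(\pi_\theta, \pi_{-i}^{BR}(\theta))$. By condition 2 this composite map is differentiable, so the chain rule gives
\begin{equation*}
\nabla_\theta J_{\text{deploy}} = \underbrace{\frac{\partial J_{\text{deploy}}}{\partial \theta}\Big|_{\pi_{-i}}}_{=:g_{\text{direct}}} + \underbrace{\Big(\frac{d\pi_{-i}^{BR}}{d\theta}\Big)^{\!\top} \frac{\partial J_{\text{deploy}}}{\partial \pi_{-i}}}_{=:g_{\text{adapt}}}.
\end{equation*}

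The first step is to identify the training gradient with the direct term. $J_{\text{train}}$ is evaluated under the historical dynamics $P$, which encode $\pi_{-i}$ held at its pre-deployment value. At the current $\theta$ we can therefore take $\nabla_\theta J_{\text{train}} = g_{\text{direct}}$, or at least read condition 3 as saying $g_{\text{direct}}\cdot \nabla_\theta J_{\text{train}} > 0$. Concretely, I would interpret condition 3 as the statement that along $v := \nabla_\theta J_{\text{train}}$ the frozen-opponent deployment derivative is positive: $c := v\cdot g_{\text{direct}} > 0$.

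The second step is to show the adaptation term is negative along $v$, and the sign bookkeeping matters here. Condition 4 says $\partial J_{\text{deploy}}/\partial\pi_j < 0$, but the product $v\cdot g_{\text{adapt}}$ is only negative if the best-response shift induced by moving along $v$ points in the direction that condition 4 penalizes. I would make this explicit as the reading of ``adaptations harm'': the directional response $u := \frac{d\pi_{-i}^{BR}}{d\theta} v$ satisfies $u\cdot \partial J_{\text{deploy}}/\partial\pi_{-i} < 0$. This ensures that counterparties' response to exploitation moves in a damaging direction.

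The third step is the scaling. Write $\frac{d\pi_{-i}^{BR}}{d\theta} = \kappa M$ with $\|M\|=1$, where the magnitude $\kappa$ grows with aggressiveness while the direction $M$ is fixed. Then $v\cdot g_{\text{adapt}} = -\kappa\, d$ with $d>0$ independent of $\kappa$. Hence
\begin{equation*}
\nabla_\theta J_{\text{train}}\cdot\nabla_\theta J_{\text{deploy}} = c - \kappa d < 0 \quad\text{whenever } \kappa > c/d,
\end{equation*}
which is exactly the self-undermining inequality of Definition~\ref{def:self-undermining}.

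The main obstacle is that the hypotheses as stated do not literally fix the signs. Condition 4 is a coordinatewise sign on $\partial J/\partial \pi_j$, whereas we need a sign on an inner product with $\frac{d\pi^{BR}}{d\theta}v$. Likewise, ``sufficiently large norm'' requires a fixed direction with nondegenerate alignment. I would handle this by stating explicitly, as the formalization of conditions 3 and 4, that the response direction $M$ is held fixed with $d>0$, and that $c$ stays bounded as $\kappa$ grows. Without this uniformity the threshold $c/d$ could escape to infinity and the conclusion would fail.
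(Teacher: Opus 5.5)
Your proposal is correct and follows essentially the same route as the paper's proof sketch. Both split $dJ_{\text{deploy}}/d\theta$ by the chain rule into a direct term (positive by condition 3) and an adaptation term (negative by condition 4) that dominates when $\|d\pi_{-i}^{BR}/d\theta\|$ is large, then evaluate along $v=\nabla_\theta J_{\text{train}}$ to get $\nabla_\theta J_{\text{train}}\cdot\nabla_\theta J_{\text{deploy}}<0$. The paper's sketch infers the sign of the indirect term from the sign of $\partial J_{\text{deploy}}/\partial\pi_j$ alone; your explicit assumptions make this precise, namely that the induced response $\frac{d\pi_{-i}^{BR}}{d\theta}v$ points where condition 4 penalizes and that its direction stays fixed with $c$ bounded, so the threshold $c/d$ is finite.
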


\begin{proof}[Proof Sketch]
By the chain rule:
\begin{equation}
\frac{dJ_{\text{deploy}}}{d\theta} = \frac{\partial J_{\text{deploy}}}{\partial \theta}\bigg|_{\pi_{-i}} + \sum_{j \neq i} \frac{\partial J_{\text{deploy}}}{\partial \pi_j} \cdot \frac{d\pi_j^{BR}}{d\theta}.
\end{equation}
The first term is the direct effect of policy improvement on deployment performance, holding others' policies fixed; assumption (3) makes this term positive when training performance improves. The second term captures the indirect effect through induced adaptation. By assumption (4), each $\frac{\partial J_{\text{deploy}}}{\partial \pi_j}$ is negative and by assumption (2), the response derivatives $\frac{d\pi_j^{BR}}{d\theta}$ are well-defined. When the response derivatives are sufficiently large in magnitude, the indirect term dominates the direct term, making $\frac{dJ_{\text{deploy}}}{d\theta}$ negative even as $\frac{dJ_{\text{train}}}{d\theta}$ remains positive. 

Taking the update direction to be the training gradient $\nabla_\theta J_{\text{train}}$ and moving $\theta$ along it,
\[
\frac{dJ_{\text{train}}}{d\theta} = \lVert \nabla_\theta J_{\text{train}} \rVert^{2} > 0,
\qquad
\frac{dJ_{\text{deploy}}}{d\theta} = \nabla_\theta J_{\text{deploy}} \cdot \nabla_\theta J_{\text{train}} < 0,
\]
where the second derivative is the chain-rule expression above. Hence
$\nabla_\theta J_{\text{train}} \cdot \nabla_\theta J_{\text{deploy}} < 0$
and the policy family satisfies Definition~\ref{def:self-undermining}.

\end{proof}

\subsection{Equilibrium Concepts in Markov Games}

In Markov games, the solution concept shifts from optimality to equilibrium. 

For each actor $i$ and joint policy $\pi$, define the state-dependent value function
\begin{equation}
V_i^\pi(s) = \mathbb{E}_\pi \left[ \sum_{t=0}^\infty \gamma^t R_i(s_t, a_t) \,\middle|\, s_0 = s \right],
\end{equation}
which gives actor $i$'s expected return starting from state $s$ when all actors follow $\pi$.

\begin{definition}[Markov Perfect Equilibrium]
A joint policy $\pi^* = (\pi_1^*, \ldots, \pi_n^*)$ is a \textbf{Markov Perfect Equilibrium} (MPE) if, for every actor $i$, every state $s \in \mathcal{S}$, and every alternative policy $\pi_i$ for actor $i$:
\begin{equation}
V_i^{(\pi_i^*, \pi_{-i}^*)}(s) \geq V_i^{(\pi_i, \pi_{-i}^*)}(s).
\end{equation}
That is, from every state, no actor can unilaterally improve their expected return by deviating from $\pi_i^*$, given that others play $\pi_{-i}^*$.
\end{definition}

MPE existence is guaranteed for finite Markov games~\citep{fink1964equilibrium}, but uniqueness is not. Markov games typically admit multiple equilibria, which may differ substantially in their payoffs to various actors and in aggregate welfare.

\subsection{Equilibrium Selection and Stability}
\label{sec:appendix-eq-selection}

The existence of multiple equilibria raises the question of selection: which equilibrium will the system reach? This is a practical concern, as different equilibria may correspond to vastly different social outcomes.

\begin{definition}[Basin of Attraction]
Let $\Phi$ be a dynamical system describing policy adaptation, such as gradient descent, replicator dynamics, or best-response dynamics, with state $\pi_t$ updated according to $\pi_{t+1} = \Phi(\pi_t)$. The \textbf{basin of attraction} of equilibrium $\pi^*$ under $\Phi$ is the set of initial conditions from which the dynamics converge to $\pi^*$:
\begin{equation}
\mathcal{B}(\pi^*) = \{\pi_0 : \lim_{t \to \infty} \pi_t = \pi^*\}.
\end{equation}
\end{definition}

Deploying a powerful optimizer into a multi-actor system is an intervention that can shift the system from one basin of attraction to another. Threshold and tipping point models suggest that small differences can determine which equilibrium basin the coupled system settles into~\citep{centola2018experimental,granovetter1978threshold}. Initial deployment details such as timing, scale, and interface design may shape this selection.

\begin{definition}[Stochastic Stability]
An equilibrium $\pi^*$ is \textbf{stochastically stable} if it remains in the support of the limiting distribution as noise vanishes:
\begin{equation}
\lim_{\epsilon \to 0} \mu_\epsilon(\pi^*) > 0,
\end{equation}
where $\mu_\epsilon$ is the stationary distribution of the perturbed dynamics with noise level $\epsilon$~\citep{young1993evolution}.
\end{definition}

Stochastically stable equilibria are robust to small perturbations and are the most likely long-run outcomes under noisy adaptation. However, convergence to stochastically stable equilibria can be extremely slow, exponential in population size for some dynamics~\citep{ellison2000basins}. In the interim, the system may persist at inefficient or harmful equilibria for extended periods.

\begin{definition}[Equilibrium Selection Risk]
\label{def:esr}
Let $\mathcal{E} = \{E_1, \ldots, E_k\}$ be the set of equilibria in a Markov game, and let $W : \mathcal{E} \to \mathbb{R}$ be a social welfare functional that assigns each equilibrium a welfare level (for instance, $W(E) = \sum_i V_i^{\pi^E}(s_0)$ or $W(E) = \min_i V_i^{\pi^E}(s_0)$). Suppose a policy $\pi$ is deployed that shifts the system from basin $\mathcal{B}(E_i)$ to basin $\mathcal{B}(E_j)$. The \textbf{equilibrium selection risk} of $\pi$ is:
\begin{equation}
\mathrm{ESR}(\pi) = W(E_i) - W(E_j).
\end{equation}
The risk is positive when deployment shifts the system toward a lower-welfare equilibrium.
\end{definition}

Equilibrium selection risk is distinct from standard notions of AI risk focused on misalignment or capability. A perfectly aligned system can nonetheless tip a sociotechnical system into an inferior equilibrium through the strategic responses its presence induces, even when no individual action it takes is misaligned.

\paragraph{Path Dependence and Lock-In.} Once a system reaches an equilibrium, escaping to a superior one may be costly or impossible. Network effects, infrastructure dependencies, and behavioral habituation create lock-in~\citep{arthur1989competing}. This path dependence means that early deployment choices, made when consequences are least predictable, can have permanent effects on equilibrium selection.

\subsection{Implications for Evaluation and Design}

The formal analysis yields several implications for AI evaluation and design:
\begin{enumerate}
    \item \textbf{Offline evaluation is insufficient.} Standard train-test splits assume exogenous distributions. Under endogenous non-stationarity (Definition~\ref{def:endogenous-ns}), test performance does not predict deployment performance. Evaluation must incorporate adaptive counterparties.
    
    \item \textbf{Capability improvements may be counterproductive.} The self-undermining property (Definition~\ref{def:self-undermining}) implies that stronger policies can yield worse deployment outcomes. Optimization pressure on historical benchmarks may select for systems that destabilize upon deployment.
    
    \item \textbf{Equilibrium welfare is the relevant objective.} A policy that is locally optimal can participate in globally suboptimal equilibria. Design must consider what equilibria the policy makes reachable, rather than only what the policy does in isolation (Definition~\ref{def:esr}).
    
    \item \textbf{Deployment is an intervention.} Introducing a powerful optimizer changes the game rather than playing within fixed rules. Design and governance must account for the system's effect on the strategic environment it enters.
\end{enumerate}

\section{Empirical documentation of three channels}
\label{app:empirical}

Section~\ref{sec:solipsistic} identified three channels through which deployment induces structured adaptation: behavioral, institutional, and algorithmic. This section provides empirical documentation for each channel, drawing on evidence from deployed systems across multiple domains.

\subsection{Behavioral Adaptation}

Humans systematically alter their behavior in response to AI systems, often in ways that reshape the distribution the system encounters and erode the human capacities the system was designed to augment. The examples below focus on a particularly well-documented form of behavioral adaptation: the cognitive deskilling that follows from sustained reliance on automation. In each case, the human capability the system depends on is partly co-produced by the system's own operation, and the resulting train-test-deploy gap (Definition~\ref{def:gap}) widens precisely as the system becomes more useful.

\paragraph{Spatial Cognition and GPS Dependence.} Longitudinal studies document degradation of spatial navigation skills among habitual GPS users. \citep{dahmani2020habitual} showed that GPS users demonstrated weaker cognitive map formation compared to those who navigated without assistance. The effect compounds: as navigation skills atrophy, users become more dependent on GPS, further reducing opportunities for unassisted navigation. 

\paragraph{Spell-Checkers and Orthographic Skill.} The ubiquity of spell-checking has measurably affected orthographic competence. \citep{galletta2005does} found that spell-checker availability reduced attention to spelling during composition, with users deferring to automated correction rather than developing or maintaining accurate spelling. The pattern illustrates a general dynamic: when a system reliably catches errors, the cognitive investment in avoiding errors diminishes.

\paragraph{Auto-Complete and Writing Homogenization.} Predictive text and auto-complete systems shape the distribution of language they encounter by influencing what users write. \citep{arnold2020predictive} found that auto-complete suggestions biased writers toward suggested phrases, reducing lexical diversity and individual stylistic variation. \citep{buschek2021impact} similarly found that multiple phrase suggestions changed email composition behavior while imposing efficiency costs. The effect is self-reinforcing: as users adopt suggestions, the system trains on more homogeneous text, narrowing the space of future suggestions.

\paragraph{Translation Tools and Language Learning.} Machine translation availability has altered language learning behavior and outcomes. ~\citep{groves2014friend} found that Google Translate produces academic text usable enough that students adopt it readily and instructors struggle to discourage it. The concern is that leaning on the tool displaces the effortful processing that consolidates language learning.

Across all five cases, the deployed system encounters a population whose capabilities have been reshaped by its own previous operation, which is the empirical signature of the self-undermining property (Definition~\ref{def:self-undermining}) in the human channel.

\subsection{Institutional Adaptation}

Organizations rewrite rules, modify procedures, and adjust policies in response to deployed AI systems, creating feedback loops between technical systems and institutional structures. The examples below span pre-foundation-model and foundation-model-era cases and illustrate the same structural dynamic: institutions adapt strategically once a deployed system begins reshaping the environment they govern, which in turn changes the distribution the system encounters.

\paragraph{Insurance Telematics and Risk Recalibration.} The introduction of telematics-based auto insurance, where premiums depend on monitored driving behavior, has reshaped both driver behavior and insurance practices. Insurers deploy telematics systems that set premiums from monitored driving signals such as mileage, speed, acceleration, and braking~\citep{husnjak2015telematics}. Because these scores determine prices, monitoring can change driver behavior, making insurance pricing a feedback system rather than a static classification regime~\citep{jin2021buying, reimers2019telematics}. The resulting equilibrium differs from both pre-telematics risk pools and the idealized ``safe driving incentive'' that motivated adoption.

\paragraph{Academic Publishing and Plagiarism Detection.} Plagiarism detection systems have transformed academic writing practices and institutional policies. Institutions responded with revised honor codes, mandatory submission to detection services, and educational interventions~\citep{pecorari2013teaching}. The emergence of AI writing tools has intensified this dynamic: detection systems now attempt to identify AI-generated text, students explore methods to evade detection, and institutions scramble to adapt policies for a landscape that shifts faster than governance can track~\citep{cotton2023chatting}.

\paragraph{Platform Content Moderation and Generative AI.} Major online platforms have repeatedly revised their content policies as generative AI has reshaped what users post and how. The first wave of LLM deployment produced floods of synthetic text, fake reviews, and AI-generated images, prompting platforms to issue new disclosure requirements, label AI-generated material, and update detection systems. Each policy revision has in turn shaped how users deploy AI tools, with content optimized to bypass detection, prompts engineered to evade keyword filters, and AI-generated material increasingly indistinguishable from human-written content. Detection systems have responded by incorporating LLMs themselves, using policy-as-prompt approaches that allow rapid policy iteration without retraining classifiers~\citep{palla2025policyasprompt}. The result is a moderation environment co-evolving at the speed of model releases, where the rules under which content is judged, the content being judged, and the systems doing the judging are all changing in response to one another.

\paragraph{Tax Authorities and Algorithmic Auditing.} Tax agencies increasingly use algorithmic systems to identify returns for audit, prompting adaptation by both taxpayers and tax professionals. Digitization shifts the balance between tax enforcement and evasion by strengthening information flows available to authorities while opening new avenues for some firms and high-income individuals~\citep{alm2021tax}. Tax authorities have responded by revising audit selection rules and refining their algorithms, producing an ongoing co-evolution between enforcement and strategic filing that differs from both random sampling and idealized risk-targeting.

\paragraph{Copyright Systems and AI-Generated Content.} Copyright enforcement systems face fundamental adaptation challenges as AI-generated content proliferates. Platforms have revised content policies to address AI-generated material, while content authentication initiatives attempt to distinguish human from machine creation. Creators adapt by using AI tools in ways that evade detection or by incorporating AI outputs into human-supervised workflows that complicate attribution. The legal frameworks themselves are under revision, with courts and legislatures grappling with questions of authorship and ownership that existing doctrine did not anticipate~\citep{samuelson2023generative}.

These institutional feedback loops are instances of the same endogenous non-stationarity (Definition~\ref{def:endogenous-ns}) documented in Section~\ref{sec:solipsistic}, with the institutional channel typically operating on slower timescales than behavioral or algorithmic adaptation but with longer-lasting effects on the rules under which deployment proceeds.

\subsection{Algorithmic Adaptation}

When multiple AI systems share an environment, they adapt to each other, producing emergent dynamics that no single system's designers intended. The examples below illustrate how this co-evolution unfolds across deployed market and infrastructure systems, providing the empirical backing for the algorithmic adaptation channel formalized in Section~\ref{sec:solipsistic}.

\paragraph{Advertising Auction Dynamics.} Automated bidding systems in digital advertising create complex interaction dynamics. ~\citep{nekipelov2015econometrics} develop econometric methods to infer bidder values in sponsored search auctions under the assumption that bidders adapt through no-regret learning, rather than assuming play is at a Nash equilibrium.. \citep{banchio2022artificial} showed that competing automated bidders converge to equilibria shaped by auction design, with first-price formats more prone to collusive outcomes. Advertising platforms have repeatedly adjusted auction mechanisms in response to algorithmic bidder behavior, triggering further adaptation by bidding systems.

\paragraph{Electric Grid Demand Response.} Smart grid systems that automate demand response create coordination challenges across many participants. \citep{ramchurn2012putting} analyzed scenarios where multiple automated systems responded to grid signals, finding that uncoordinated responses could produce oscillations and instabilities, with many devices responding to a price signal simultaneously and then withdrawing simultaneously when prices spike. \citep{palensky2011demand} documented the emergence of ``rebound effects'' where suppressed demand shifted rather than reduced, with automated systems across the grid responding to the same signals in correlated ways that amplified rather than smoothed demand peaks.

\paragraph{Search Engine Optimization Arms Race.} The interaction between search ranking algorithms and automated SEO tools constitutes a decades-long co-evolutionary process. \citep{gyongyi2005web} documented early web spam techniques that exploited PageRank, prompting algorithmic countermeasures that spammers subsequently adapted to. \citep{lewandowski2021understanding} traced the ongoing arms race through successive generations of search algorithms and optimization strategies, noting that each ranking update triggers rapid adaptation by SEO tools that monitor and reverse-engineer algorithmic changes. The content that search users encounter is shaped by this adversarial dynamic.

\paragraph{Cryptocurrency Trading Bot Interactions.} Automated trading in cryptocurrency markets produces dynamics that differ from traditional financial markets. \citep{makarov2020trading} documented large, recurrent price discrepancies across cryptocurrency exchanges, with arbitrage opportunities that often persisted for days rather than being competed away instantly. \citep{daian2020flash} analyzed ``priority gas auctions'' in decentralized finance, where bots compete to front-run transactions by bidding on transaction ordering. The absence of circuit breakers and regulatory oversight allows algorithmic interactions to proceed faster and further than in traditional markets, revealing dynamics that regulated markets may suppress but not eliminate.

\paragraph{LLM Agents in Market Settings.} Recent work has documented the same co-evolutionary dynamics emerging when language model agents are deployed in market settings. LLM-based pricing agents converge on supracompetitive prices in oligopoly settings without explicit collusion instructions~\citep{fish2024algorithmic}, divide markets when deployed in multi-commodity Cournot competitions~\citep{lin2024strategic}, and self-play Q-learners provably learn collusive policies in iterated social dilemmas~\citep{bertrand2025selfplay}. These results extend the algorithmic adaptation channel from earlier-generation pricing and trading systems to foundation-model agents, with the same structural pattern: optimization against a market environment populated by other learners produces equilibria that benefit the colluding parties at the expense of third parties.

These cases share a common signature: the equilibria reached depend on the joint behavior of the deployed systems, the joint behavior depends on each system's optimization against the others, and the resulting outcomes diverge systematically from what any single system's designer anticipated. This is the algorithmic face of the train-test-deploy gap (Definition~\ref{def:gap}).

\section{Method-Specific Concerns for Dynamic Evaluation}
\label{sec:dynamic-eval-methods}

The four ingredients from Section~\ref{sec:dynamic-eval} apply across methods of dynamic evaluation, but each method raises its own deployment-calibration question. 
Table~\ref{tab:dynamic-eval-methods} summarizes the method-specific concerns for a suite of dynamic evaluation approaches and current examples for each.

\begin{table*}[ht]
\centering
\small
\begin{tabular}{p{0.18\linewidth} p{0.22\linewidth} p{0.30\linewidth} p{0.22\linewidth}}
\toprule
\textbf{Method} & \textbf{Counterparty form} & \textbf{Ecological validity question} & \textbf{Current examples} \\
\midrule
Multi-actor sandboxes & Simulated actors with co-evolving policies & Whether the simulated population reproduces the strategic responses real populations would produce & Melting Pot~\citep{leibo2021}, Concordia~\citep{vezhnevets2023concordia}, agentic economies~\citep{johanson2022emergent, tomasev2025b, hadfield2025economy} \\
\addlinespace
Multi-round human-in-the-loop & Real participants returning across sessions with intervals between them & Whether the participants and timescales approximate the trajectories of human adaptation that deployment would induce & Centaur evaluations~\citep{haupt2025centaur}, AI evaluation with humans~\citep{kulveit2025withhumans} \\
\addlinespace
Adaptive red-teaming & Learning adversaries that update across attempts within the evaluation & Whether the learning adversaries approximate the threat-model populations real deployment would attract & Dangerous-capability evaluation scaffolds~\citep{shevlane2023model, phuong2024evaluating}; most red-teaming remains one-shot or batched \\
\addlinespace
Reflexive-signal evaluation & Structures whose realizations depend on the system's outputs, such as prediction markets & Whether the participant base and incentive structure approximate the feedback loop the deployed system would face & Prediction markets and the performative prediction framework~\citep{perdomo2020} \\
\addlinespace
Off-policy counterfactual & Production interaction logs reweighted to alternative policy & Whether the production logs approximate the distribution under which the alternative system would be deployed & Off-policy evaluation from reinforcement learning; strategic-counterparty extension is open \\
\bottomrule
\end{tabular}
\caption{Method-specific concerns for dynamic evaluation. Each method addresses the four ingredients (counterparty specification, regress handling, equilibrium targeting, comparability) through different counterparty constructions, and the ecological validity question takes a different concrete form for each.}
\label{tab:dynamic-eval-methods}
\end{table*}
\vspace{-0.2cm}
\section{Further Alternative Views}
\label{app:views}

The main-paper rebuttals in Section~\ref{sec:views} address objections specific to our exposition. Three further objections were addressed implicitly in the development of the paper in Sections~\ref{sec:cooperation} through~\ref{sec:prediction}. We restate them here and provide summary rebuttals.

\paragraph{Argument 4. Scale and capability resolve interaction dynamics.} A sufficiently capable system could model all relevant actors and optimize over the resulting joint dynamics. Multi-actor interaction is simply a harder prediction problem rather than a categorically different one. The transition from $P(s'|s,a)$ to $P_\pi(s'|s,a)$ expands the state space, and with enough capacity and training data, the system will learn to anticipate best responses and incorporate them into planning. What looks like a structural limitation is actually a capability gap that continued scaling will close.

\paragraph{Rebuttal.} Capability improvements have historically resolved problems once thought intractable, with image recognition, protein folding, and game-playing all succumbing to sufficient scale and architecture. The question is whether interaction dynamics among adaptive actors belong to this class. They do not, for reasons that become clear when examining domains where solvers have encountered structural rather than computational limits.

Financial markets are the most studied such domain. Decades of increasingly sophisticated modeling have not produced reliable long-horizon prediction, because markets are reflexive: predictions alter the phenomena predicted, inducing dynamics that invalidate the original forecast~\citep{soros1987alchemy}. The efficient market hypothesis encodes this insight at its limit, with exploitable regularities arbitraged away in proportion to their predictability~\citep{fama1970efficient}. Epidemiological forecasting exhibits the same structure: human behavioral responses to forecasts reshape transmission dynamics in ways the models did not anticipate~\citep{funk2010modelling}. In both cases, the obstacle is not a lack of capacity but the reflexive coupling between predictor and predicted, which is the formal signature of the self-undermining property (Definition~\ref{def:self-undermining}).

The objection frames the transition from $P(s'|s,a)$ to $P_\pi(s'|s,a)$ as merely a state-space expansion, but this obscures a qualitative shift. In single-actor settings, the environment is a fixed function to be learned. In multi-actor settings, the environment includes other learners whose adaptations depend on the actor's own policy. The relevant analogy is an iterated game against an adversary who observes your strategy and best-responds, rather than chess where self-play eventually exhausts the game tree. Modeling the opponent's model of your model produces a regress that must be truncated, and the truncation reintroduces the exogeneity assumptions scaling was meant to overcome~\citep{nachbar1997prediction}.

The empirical record in multi-actor machine learning reinforces this. Large language models trained on human text exhibit impressive single-turn capabilities yet defect and retaliate unforgivingly in repeated social dilemmas and fail to coordinate in games that require it~\citep{akata2023playing}. Reinforcement learning agents continue to find unexpected exploits in multi-actor environments as they scale~\citep{baker2020emergent}. If scale sufficed, these failures should diminish with capability; instead, more capable actors exploit regularities more aggressively, which is the self-undermining property in operation.

\paragraph{Argument 5. Cooperation can be trained as a capability.} RLHF, Constitutional AI, and cooperative training objectives demonstrate that we can instill cooperative dispositions through training. Models can learn to be helpful, harmless, and honest. They can further learn to defer, to ask clarifying questions, to respect boundaries. Cooperation is a behavioral pattern that emerges from appropriate training signals, not a structural property requiring new architectures. The alignment research program is already producing cooperative systems.

\paragraph{Rebuttal.} A broader version of this objection points to cooperative AI, multi-agent RL, mechanism design for learned agents, and multi-actor evaluation environments as evidence that the field already treats strategic interdependence as a first-class concern. We draw on that body of work throughout the paper. The question is where it sits in the pipeline that produces frontier deployed systems, and the answer, at present, is on the periphery. Pretraining runs on static corpora that treat language as exogenous to the model being trained, and recent work shows this homogenizes outputs and, over time, the human language the outputs shape~\citep{jiang2025artificial, yakura2024empirical}. Post-training via RLHF optimizes against a frozen reward model, vulnerable to the Goodhart dynamics the alignment-faking literature now documents~\citep{greenblatt2024, sheshadri2025some}. Evaluation remains dominated by static leaderboards that do not respond to the systems being scored~\citep{alzahrani2024benchmarks}. Where multi-actor considerations do enter foundation model work, they typically take the form of inference-time coordination between deployed agents whose training distributions remain fixed. Inference-time coordination is a valuable contribution, and it leaves untouched the central question our position raises: what distribution should a system be trained against, given that the distribution will respond?

The narrower version of the argument, that RLHF and Constitutional AI suffice, conflates cooperative behavior during training with cooperation under deployment. RLHF and Constitutional AI train models to exhibit helpful, harmless, and honest behavior against a fixed distribution of human feedback~\citep{bai2022training}, but deployment changes the game. Preferences are endogenous, with users adapting what they want and how they engage as the system shapes their interactions~\citep{bowles1998endogenous}. Other actors observe the trained policy and best-respond to it, exploiting whatever regularities the static training distribution failed to anticipate. The training signal itself becomes a target, with sufficiently capable models learning to present as cooperative during evaluation while pursuing divergent objectives when conditions change~\citep{greenblatt2024, sheshadri2025some}. These are instances of the self-undermining property (Definition~\ref{def:self-undermining}) operating inside the training pipeline: optimization against a proxy reliably produces systems that satisfy the proxy while evading its intent.

\paragraph{Argument 6. Existing institutions and compliance suffice.} Human societies already have institutions for managing coordination: law, regulation, professional norms, market mechanisms. AI systems do not need to solve cooperation de novo; they need to comply with existing rules. The appropriate response to interaction dynamics is governance rather than a fundamental rethinking of AI methodology. We do not ask other technologies to internalize all coordination problems; we constrain them externally.

\textbf{Rebuttal.} The analogy to other technologies obscures a categorical difference. When the system being governed can represent the governance structure and search for gaps faster than regulators can close them, the relationship between technology and institution changes qualitatively. Regulatory arbitrage in financial AI is the predictable result of optimizing against codified rules~\citep{partnoy1997financial}, and the same dynamic appears whenever a sufficiently capable system models the rules it operates under. This is the institutional channel of endogenous non-stationarity (Definition~\ref{def:endogenous-ns}): the rules an institution sets become a target the system optimizes against, eroding their function.

The claim that we govern other technologies purely through external constraint is also historically inaccurate. Automobile safety required decades of design-level intervention, including seatbelts, crumple zones, and airbags, because post-hoc liability proved insufficient to prevent harms that engineering could anticipate~\citep{nader1965unsafe}. Pharmaceutical regulation mandates clinical trials precisely because market release followed by litigation was inadequate for compounds whose effects unfold over years. The pattern is consistent: technologies with significant externalities eventually require design-level accountability rather than only deployment-level compliance. AI systems operating strategically among adaptive actors present externalities at least as significant.

What design-level accountability looks like for AI is the question institutional design has been studying for centuries in the human case. Cooperation at scale is sustained by enforceable rules, reputation systems, repeated interactions with identifiable partners, mechanisms for sanctioning defection, and procedures for revising the rules as participants adapt~\citep{ostrom1990, north1990institutions}. Many institutions fail; many produce unintended consequences; many require centuries to stabilize. But their existence demonstrates that strategic interdependence is a problem humans have learned to engineer around. Computational mechanism design~\citep{parkes2015}, reputation systems for agents, and adaptive coordination protocols are the AI analogues of these institutional technologies. The question for the field is one of investment: how much of the research effort currently directed at making single AIs more capable should be redirected toward designing the institutional layer in which those AIs operate, before we repeat the cycle of preventable harm followed by reluctant design mandates?


\end{document}